\documentclass{article} \usepackage{iclr2027_conference,times}

\usepackage{amsmath,amsfonts,bm}

\def\eqref#1{equation~\ref{#1}}
\def\Eqref#1{Equation~\ref{#1}}

\def\1{\bm{1}}

\DeclareMathAlphabet{\mathsfit}{\encodingdefault}{\sfdefault}{m}{sl}
\SetMathAlphabet{\mathsfit}{bold}{\encodingdefault}{\sfdefault}{bx}{n}

\newcommand{\R}{\mathbb{R}}

\newcommand{\KL}{D_{\mathrm{KL}}}

\usepackage{enumitem}
\usepackage{fancyvrb}
\usepackage[table]{xcolor}
\usepackage{booktabs}
\usepackage{graphicx}
\usepackage{float}

\usepackage{acronym}
\acrodef{SSL}[SSL]{self-supervised learning}
\acrodef{AI}[AI]{artificial intelligence}
\acrodef{DM}[LDM]{latent distribution matching}
\acrodef{MI}[MI]{mutual information}
\acrodef{ICA}[ICA]{independent component analysis}
\acrodef{SFA}[SFA]{slow feature analysis}
\acrodef{stopgrad}[stopgrad]{stop gradient}
\acrodef{JEPA}[JEPA]{joint-embeddding predictive architecture}
\acrodef{CPC}[CPC]{contrastive predictive coding}
\acrodef{MLP}[MLP]{multi-layer perceptron}
\acrodef{LSTM}[LSTM]{long short-term memory}
\acrodef{infoLDM}[InfoLDM]{information maximizing latent distribution matching} \usepackage{microtype}
\usepackage{amsmath,amssymb,amsthm,mathtools}
\newcommand\identity{1\kern-0.25em\text{l}}
\DeclareUnicodeCharacter{03A9}{\ensuremath{\Omega}}
\DeclareUnicodeCharacter{03B9}{\ensuremath{\iota}}
\DeclareUnicodeCharacter{03BA}{\ensuremath{\kappa}}
\DeclareUnicodeCharacter{03C9}{\ensuremath{\omega}}
\DeclareUnicodeCharacter{1D50}{\ensuremath{{}^{\mathrm m}}}
\DeclareUnicodeCharacter{1D62}{\ensuremath{{}_{\mathrm i}}}
\DeclareUnicodeCharacter{2080}{\ensuremath{{}_{0}}}
\DeclareUnicodeCharacter{211D}{\ensuremath{\mathbb{R}}}
\DeclareUnicodeCharacter{21A6}{\ensuremath{\mapsto}}
\DeclareUnicodeCharacter{2200}{\ensuremath{\forall}}
\DeclareUnicodeCharacter{2202}{\ensuremath{\partial}}
\DeclareUnicodeCharacter{2203}{\ensuremath{\exists}}
\DeclareUnicodeCharacter{2208}{\ensuremath{\in}}
\DeclareUnicodeCharacter{2227}{\ensuremath{\wedge}}
\DeclareUnicodeCharacter{2264}{\ensuremath{\leq}}
\DeclareUnicodeCharacter{22A4}{\ensuremath{\top}}
\DeclareUnicodeCharacter{27C2}{\ensuremath{\perp}}
\usepackage[hidelinks]{hyperref}
\usepackage{url}
\usepackage{tikz}
\usetikzlibrary{arrows.meta,backgrounds,calc,fit,positioning}

\newcommand{\Normal}{\mathcal{N}}
\newcommand{\rank}{\operatorname{rank}}
\newcommand{\im}{\operatorname{im}}
\newcommand{\indep}{\mathrel{\perp\!\!\!\perp}}

\newcommand{\cslthreefactorpanel}[1]{\begin{tikzpicture}[
        baseline=(current bounding box.center),
        x=8.4mm,
        factor/.style={
            circle,
            draw=black!55,
            semithick,
            minimum size=7.5mm,
            inner sep=0.5pt,
            align=center,
            font=\scriptsize
        },
        signal/.style={-{Latex[length=1.4mm]},black,semithick}
    ]
        \foreach \row/\y in {1/0,2/-1.0} {
            \node[font=\scriptsize\bfseries,anchor=e] at (-0.48,\y) {Img.~\row};
            \node[factor] (ob\row) at (0,\y) {$i_{\mathrm{obj}}$};
            \node[factor] (p\row) at (1,\y) {$p_{\mathrm{obj}}$};
            \node[factor] (theta\row) at (2,\y) {$\theta_{\mathrm{obj}}$};
            \node[factor] (ho\row) at (3,\y) {$h_{\mathrm{obj}}$};
            \node[factor] (ps\row) at (4,\y) {$p_{\mathrm{spot}}$};
            \node[factor] (hs\row) at (5,\y) {$h_{\mathrm{spot}}$};
            \node[factor] (hb\row) at (6,\y) {$h_{\mathrm{back}}$};
        }
        \foreach \factor in {#1} {
            \draw[signal] (\factor1.south) -- (\factor2.north);
        }
    \end{tikzpicture}}

\newtheorem{assumption}{Assumption}
\newtheorem{theorem}{Theorem}
\newtheorem{corollary}{Corollary}

\theoremstyle{remark}

\title{Predictive Self-Supervised Learning Provably Identifies Stochastic Signals under Nuisance}

\author{Fabian A. Mikulasch \textsuperscript{1}\& Friedemann Zenke \textsuperscript{1,2} \\
\textsuperscript{1}Friedrich Miescher Institute for Biomedical Research, Basel, Switzerland \\ 
\textsuperscript{2}Faculty of Science, University of Basel, 
Switzerland\\ 
\texttt{\{firstname.lastname\}@fmi.ch}}

\iclrpreprintcopy \begin{document}

\maketitle

\begin{abstract}
\Ac{SSL} by predicting in latent space, without generating the input data itself, learns highly abstract, useful representations.
Intuitively, this success is often attributed to its ability to discard nuisance information that is irrelevant to prediction.
However, this poses a conundrum: both stochastic variation in a prediction-relevant latent signal and true nuisance make observations partly unpredictable; how could they be distinguished?
Surprisingly, we prove that common \ac{SSL} methods can achieve exactly this, by implicitly instantiating a latent-variable model with stochastic dynamics and observation-private nuisance.
We trace their ability to recover the stochastic signal to two complementary principles:
Predictive \acl{MI} maximization ensures that representations retain the information needed for prediction, while
\acl{DM} constrains how this information is encoded, thereby making the retained signal identifiable.
We confirm this identifiability result in simulations for Gaussian predictors, which recover the true signal up to an affine transformation even in dynamic, nuisance-laden environments.
\end{abstract}

\section{Introduction}

\Acf{SSL} has become a central paradigm for representation learning, enabling models to extract useful structure from unlabeled data across vision, language, and audio \citep{noroozi_unsupervised_2016,oord_representation_2019,dawid2024introduction,gui2024survey}.
Its success is often attributed to a simple intuition: different observations in the same context share a context-specific signal while also containing observation-specific variation that should not be represented.
For example, image augmentations alter image details without changing object identity, different modalities can provide different views of the same signal, and temporal observations combine an evolving signal with non-predictive nuisance.
By learning what is shared or predictable across observations, non-generative modeling is thought to retain the underlying \emph{signal} while \emph{nuisance} variables are not represented but left implicit in the generative model.

This nuisance-based explanation is pervasive, but it is rarely made explicit in statistical models.
For example, non-generative modeling has been derived from the idea that representations should preserve the \ac{MI} shared between observations, while unpredictable pixel-level nuisance might be ignored \citep{oord_representation_2019,tian2020contrastive,shwartz2023information,galvez2023role,shwartz_ziv_compress_2024}.
Although these information-theoretic approaches provide an intuitive view of how nuisance is treated in non-generative models, they do not formally specify what exactly constitutes nuisance, and how the remaining signal is represented.

One route from the intuitive view to formal insights is given by identifiability theory.
Results in nonlinear \ac{ICA} and non-generative modeling show that latent variables can be recovered up to restricted transformations when the learned and true conditional distributions have suitable structure \citep{hyvarinen2019nonlinear}.
This analysis has been extended to prove that non-generative models can identify the signal even under the presence of nuisance \citep{kugelgen_self-supervised_2022}.
These results require that the model is provided with different views of the same data point, where nuisance is private to each view, and the signal variable is \emph{exactly} the same for each view.
However, this does not cover the more general setting where the signal can evolve stochastically itself.
This raises the question: is it possible to distinguish noise in otherwise predictable dynamics from completely unpredictable nuisance?

Here we formalize implicit nuisance in stochastic latent variable models and study when non-generative modeling recovers the nuisance-free signal.
Our main contributions are:

\begin{description}[style=unboxed,leftmargin=0cm,nolistsep]
\setlength\itemsep{4pt}
\item[Disentangling the roles of \ac{MI} maximization and distribution matching.] We show that in non-generative SSL, predictive \ac{MI} maximization and \ac{DM} play distinct roles: \ac{MI} maximization guarantees that representations retain all predictable information, while \ac{DM} alone is responsible for identification of the retained signal. This resolves prior ambiguity about the function of the terms in non-generative SSL objectives \citep{tschannen2019mutual,mikulasch2026understanding}, showing that \ac{MI} maximization and \ac{DM} are not sufficient on their own, but complement each other.

\item[Identifiability under stochastic, private nuisance.]
We introduce a latent-variable model in which nuisance is private to each observation given a stochastically evolving signal, relaxing the deterministic shared-context assumption of prior identifiability results \citep{kugelgen_self-supervised_2022,daunhawer_identifiability_2023}. 
For exponential-family predictive models we prove that exact \ac{DM} together with predictive-information saturation recovers the true sufficient statistic as an affine readout of the learned statistic.

\item[Affine recovery for Gaussian predictors.]
We specialize our general result to the practically relevant case of Gaussian predictive models, showing that the signal is recovered as an affine transformation of the learned representation whenever the latent dimensionality is large enough. 
We confirm this in simulations with stochastic latent dynamics and unpredictable nuisance, and in a stochastic variant of the Causal3DIdent benchmark, demonstrating recovery of causally coupled signal factors from high-dimensional images. 

\end{description}

\section{Related work}

\textbf{Theory of non-generative latent models.}
There has been significant interest in understanding non-generative modeling through theoretical analysis \citep[e.g.,][]{Arora_Khandeparkar_Khodak_Plevrakis_Saunshi_2019,wang2020understanding,Ben-Shaul_Shwartz-Ziv_Galanti_Dekel_LeCun_2023}.
A line of work closely related to this article aims to connect contrastive methods (e.g., CPC) to latent variable models \citep{kugelgen_self-supervised_2022,zimmermann_contrastive_2022,aitchison_infonce_2023,Nakamura_Okada_Taniguchi_2023, bizeul_probabilistic_2024,mikulasch2026understanding}.
Others proposed information-theoretic interpretations of regularization-based methods \citep[e.g., VICReg, ][]{shwartz2023information} and clustering/contrastive methods \citep{galvez2023role}.

\textbf{Identifiable latent variable models without nuisance.}
Identifiability is a core concept in linear \ac{ICA} \cite{hyvarinen1999independent}, which guarantees the recovery of "true" underlying variables up to trivial transformations.
The concept was later extended to nonlinear \ac{ICA} using assumptions such as temporal structure or auxiliary variables \citep{sprekeler2014extension,khemakhem2020variational,hyvarinen2019nonlinear,roeder2021linear}. 
More recently, these insights also enabled proving identifiability for non-generative models \citep{zimmermann_contrastive_2022,laiz2024self,mikulasch2026understanding}.
While these models study identification under stochastic dynamics, they do not include nuisance in their generative model.

\textbf{Identifiable latent variable models with nuisance.}
A related line of work studies identifiability in the presence of view-specific nuisances. 
\citet{gresele2020incomplete} showed that shared sources can be identified from multiple nonlinear views despite view-specific corruptions. 
\citet{kugelgen_self-supervised_2022} and \citet{lyu2021understanding} extended this approach to show that common \ac{SSL} methods might be used to extract the shared components between views. 
\citet{daunhawer_identifiability_2023} transported these insights to multimodal settings, proving recovery of factors shared across modalities in the presence of modality-specific latent variation. 
These works establish that identifiability is possible when nuisance variables are present, but require deterministic relationships of the recovered variables between views.

\section{Theory}

Before we develop our theory of predictive \ac{SSL} under nuisance we briefly review the concepts of predictive \ac{MI} maximization and latent distribution matching that lie at the core of our work.

The specific loss function we are going to analyze has been used in several previous works.
Initially, \citet{oord_representation_2019} proposed InfoNCE, to maximize predictive MI $I[z;z_c]$ between representations $z$ and predictive variables $z_c$. 
Later work has emphasized that the objective that is actually maximized by many of the common \ac{SSL} algorithms, including InfoNCE \citep{oord_representation_2019}, SimCLR \citep{chen2020simple}, and VICReg \citep{shwartz2023information}, has the general form
\citep{aitchison_infonce_2023,mikulasch2026understanding}
\begin{align}
\begin{split}
    \mathcal F(f,f_c,\theta)
    =&
    -\KL[q_f(z,z_c)\|p_\theta(z,z_c)]+I_{q_f}[z;z_c]\\
    =&\langle\log p_\theta(z, z_c) \rangle_{q_f(z, z_c)} + H_{q_f}[z]+ H_{q_f}[z_c] \; .
    \label{eq:infoLDM}
\end{split}
\end{align}
Here $q_f(z,z_c)$ is the pushforward distribution of observations $p(x,x_c)$ through the observation encoder $z=f(x)$ and condition encoder $z_c=f_c(x_c)$, and $p_\theta(z, z_c)$ is a latent model that is learned alongside.
Different entropy estimators lead to different \ac{SSL} algorithms---for example, KDE and log-determinant of covariance estimators can be related to SimCLR \citep{chen2020simple} and VICReg \citep{shwartz2023information}, respectively \citep{mikulasch2026understanding}.
By this goal function, representations are not only required to be mutually informative, they are also required to follow a simple learned latent model $p_\theta(z, z_c)$ via \ac{DM}.
In the following we will therefore refer to \Eqref{eq:infoLDM} as the InfoLDM goal function.

Commonly, but not necessarily, this goal function is simplified by ignoring the marginal of the conditioning variable and setting $p_\theta(z_c)=q_f(z_c)$, leading to
\begin{align}
    \mathcal F(f,f_c,\theta)
    =&\langle\log p_\theta(z\mid z_c) \rangle_{q_f(z, z_c)} + H_{q_f}[z] \; .
    \label{eq:infoLDMsimple}
\end{align}
This is a well-known lower bound on the \ac{MI} between latent variables \citep{poole2019variational}, but we here analyze it with the clear framing that maximizing it performs both \ac{MI} maximization and \ac{DM}.

\subsection{Exponential family statistical recovery}

The main theorem specifies when maximizing the InfoLDM goal function (\Eqref{eq:infoLDM}) recovers the signal $s$ in the learned representation $z$ up to simple transformations. 
For this we make the following assumptions, which are specified more rigorously in Appendix~\ref{app:assumptions}.

\textbf{Generative model.}
Let $s\in\mathcal S\subseteq\R^{d_s}$ be a semantic \emph{signal} variable, $n\in\R^{d_n}$ a noisy \emph{nuisance} variable, and let $c$ denote the true \emph{condition} (or cause, context).
Let $x=g(s,n)$ be the observation, and $x_c$ the observed condition.
As before, let $z=f(x)\in\mathcal Z\subseteq\R^{d_z}$ be the learned representation, and $z_c=f_c(x_c)\in\mathcal Z_c\subseteq\R^{d_{z_c}}$ the learned encoded condition.
Finally, let $q_f(z,z_c)$ be the pushforward distribution of observations, and $p_\theta(z, z_c)$ the learned model (Figure~\ref{fig:private-nuisance-dag}).

\begin{figure}[t]
    \centering
    \begin{tikzpicture}[
        >={Latex[length=2.2mm]},
        node distance=5mm and 14mm,
        random/.style={circle,draw,minimum size=8mm,inner sep=0pt},
        deterministic/.style={rectangle,draw,minimum size=8mm,inner sep=0pt},
        every edge/.style={draw,->,semithick}
    ]
        \node[random] (c) {$c$};
        \node[random,below=of c] (s) {$s$};
        \node[random,below=of s] (n) {$n$};
        \node[random,right=of c] (xu) {$x_c$};
        \node[deterministic,right=of s] (x) {$x$};
        \node[deterministic,right=of xu] (zu) {$z_c$};
        \node[deterministic,right=of x] (z) {$z$};
        \node[text=blue!60!black,font=\fontsize{8}{8}\selectfont,align=center,left=5mm of c] (cond) {Conditioning variables\\(e.g., history, actions, context)};
        \node[text=blue!60!black,font=\fontsize{8}{8}\selectfont,align=center] (sig) at (cond |- s) {Signal variables};
        \node[text=blue!60!black,font=\fontsize{8}{8}\selectfont,align=center] (nuis) at (cond |- n) {Nuisance variables};
        \node[text=orange!80!black,font=\fontsize{8}{8}\selectfont,align=center,right=12mm of zu] (cond2) {Encoded condition};
        \coordinate (sn-midpoint) at ($(s)!0.5!(n)$);
        \node[text=orange!80!black,font=\fontsize{8}{8}\selectfont,align=center] (sn2) at (cond2 |- z) {Encoded observation};

        \begin{scope}[on background layer]
            \node[draw=blue!60!black,dotted,rounded corners=4pt,thick,
                inner sep=3mm,fit=(c) (s) (n) (xu) (x),
                label={[text=blue!60!black,font=\bfseries,above]Generative model}] {};
            \node[draw=orange!80!black,dotted,rounded corners=4pt,thick,
                inner xsep=10mm,inner ysep=3mm,fit=(zu) (z),
                label={[text=orange!80!black,font=\bfseries,above]Learned model}] {};
        \end{scope}

        \path
            (c) edge (s)
            (c) edge (xu)
            (s) edge (n)
            (s) edge[blue!50!black]  (x)
            (n) edge[blue!50!black] node[midway,above=2mm,text=blue!50!black]{$g$} (x)
            (xu) edge[green!50!black] node[midway,above,text=green!50!black]{$f_c$} (zu)
            (zu) edge[bend left=60,draw=red!60!black] node[midway,right,text=red!60!black] {$p_\theta$} (z)
            (x) edge[green!50!black] node[midway,above,text=green!50!black] {$f$} (z);

    \end{tikzpicture}
    \caption{Graphical model for observation-private nuisance.
    The three layers show (from left to right) unobserved latent variables, observations, and learned representations.
    Square nodes denote deterministically defined variables.
}
    \label{fig:private-nuisance-dag}
\end{figure}

\textbf{Assumption 1.} The nuisance is private to the current observation in the sense that
\begin{equation}
    n\indep c\mid s.
\end{equation}

\textbf{Assumption 2.} The true predictive distribution is exponential family
\begin{equation}
    p(s\mid c)
    =h_s(s)
    \exp\!\left(
        \alpha(c)^\top\tau_\star(s)-\psi(\alpha(c))
    \right),
\end{equation} 
where $\tau_\star:\mathcal S\to\R^{r_s}$ is a sufficient statistic, $h_s$ a fixed carrier function, and $\alpha(c)$ encodes the dependency on the condition $c$. Similarly, the learned latent model is exponential family
\begin{equation}
    p_\theta(z\mid z_c)
    =h_z(z)
    \exp\!\left(
        \beta(z_c)^\top\tau_z(z)-\phi(\beta(z_c))
    \right),
\end{equation}
where the predictor $\beta(z_c)$ is learned, and $\tau_z:\mathcal Z\to\R^{r_z}$.

\textbf{Assumption 3.} By optimizing the InfoLDM goal function Equation~\ref{eq:infoLDM} via $q_f(z,z_c)$ and $p_\theta(z,z_c)$ it is possible to reach the global optimum where
\begin{equation}
    \KL[q_f(z,z_c)\|p_\theta(z,z_c)]=0,
    \qquad
    I_{q_f}[z;z_c]=I[(s,n);c]=I[s;c]<\infty.
\end{equation}

\textbf{Assumption 4.} It is possible to choose $c_0,c_1,\ldots,c_{r_s}$ from the support of $p(c)$ such that
\begin{equation}
    D
    :=
    \begin{bmatrix}
        (\alpha(c_1)-\alpha(c_0))^\top\\
        \vdots\\
        (\alpha(c_{r_s})-\alpha(c_0))^\top
    \end{bmatrix}
    \in\R^{r_s\times r_s}
\end{equation}
is invertible.

\textbf{Assumption 5.} For every condition $c$ the random vector $\tau_\star(s)$ with $s\sim p(s\mid c)$ is not contained in a proper affine hyperplane of $\R^{r_s}$.

\medskip

Most of the presented assumptions are standard in the context of nonlinear identification theory.
Assumption~\ref{ass:markov} can be understood as a definition of nuisance factors, similar to the definition by \citet{kugelgen_self-supervised_2022}.
Assumptions~\ref{ass:exponential-families},~\ref{ass:condition-diversity} and~\ref{ass:source-nondegenerate} are standard assumptions in the theory of exponential family identification \citep[e.g.,][]{khemakhem2020variational}.
Chiefly,~\ref{ass:condition-diversity} and~\ref{ass:source-nondegenerate} ensure that all latent dimensions of the probabilistic model, not only a subset, are well constrained.

Assumption~\ref{ass:ideal-optimum} is the strongest. It implicitly replaces common invertibility assumptions on the generative model.
To see this relation consider that the generative model (Figure~\ref{fig:private-nuisance-dag}) together with the data processing inequality imply $
    I_{q_f}[z;z_c]
    \le I[x;x_c]
    \le I[s;c]
    =I[(s,n);c].
$
At the optimum these inequalities become equalities, and in particular information has to be preserved in the observations $I[x;x_c] = I[s;c]$, restricting $g(s,n)$ and $p(x_c\mid c)$ to be information-preserving.
Additionally, this restriction implicitly requires that the latent representation is sufficiently high-dimensional to capture the required information, as will be demonstrated by Theorem~\ref{thm:exponential-readout}.

\medskip With these assumptions in place we are ready to state our main result.

\begin{theorem}[Exponential-family statistical recovery through InfoLDM]
\label{thm:exponential-readout}
Suppose Assumptions~\ref{ass:markov}, \ref{ass:exponential-families}, \ref{ass:ideal-optimum}, \ref{ass:condition-diversity}, and~\ref{ass:source-nondegenerate} hold, and the InfoLDM goal function (\Eqref{eq:infoLDM}) is fully maximized.

Then the dimension of the model sufficient statistic $r_z\ge r_s$, and there is a full-row-rank matrix $A\in\R^{r_s\times r_z}$ and a vector $b\in\R^{r_s}$ such that
\begin{equation}
    \boxed{\tau_\star(s)=A\tau_z(z)+b}
    \qquad\text{almost surely}.
    \label{eq:main-readout}
\end{equation}
If $\tau_\star$ is injective, then $s$ is almost surely a measurable function of $z$.
If $r_z=r_s$, then $A$ is invertible and the two sufficient statistics are related by an invertible affine transformation.
\end{theorem}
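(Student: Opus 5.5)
The strategy is to turn the information-saturation condition of Assumption~\ref{ass:ideal-optimum} into a statement about conditional densities, and the distribution-matching condition into an exponential-family form for the same object. Comparing the two log-densities across conditions then yields an affine relation between sufficient statistics, which Assumptions~\ref{ass:condition-diversity} and~\ref{ass:source-nondegenerate} make invertible.

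\textbf{Step 1 (information saturation gives a likelihood-ratio identity).} The chain $c\to x_c\to z_c$ and $(s,n)\to x\to z$ holds, with $z$ and $z_c$ deterministic functions of $x$ and $x_c$. Under Assumption~\ref{ass:markov}, $I[(s,n);c]=I[s;c]$. The equality $I_{q_f}[z;z_c]=I[s;c]<\infty$ therefore forces equality in every data-processing step. I plan to use the equality case of the DPI, which holds when the intermediate variable is conditionally sufficient. Concretely, the joint law of $(s,c)$ and that of $(z,z_c)$ must carry the same information. Writing $I[s;c]\le I[(s,z);c]$ and $I[z;z_c]\le I[z;c]\le I[s;c]$, equality gives $c\indep s\mid z$; equivalently, $s\to z\to c$ is Markov given that $z$ is a function of $(s,n)$ and $n\indep c\mid s$. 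Likewise $z\indep c\mid z_c$. Together these yield
\[
\log\frac{q(z\mid z_c)}{q(z\mid z_c')}=\log\frac{p(s\mid c)}{p(s\mid c')}
\]
almost surely, for $z_c=f_c(x_c)$ with $x_c\sim p(x_c\mid c)$, and similarly for $c'$. The key sub-claim is that $\log p(s\mid c)/p(s)$ is a measurable function of $z$ (this is $c\indep s\mid z$), and that $\log q(z\mid c)/q(z)=\log q(z\mid z_c)/q(z)$ (this is $z\indep c\mid z_c$). I expect this step to be the main obstacle. It needs care with conditional densities and null sets, and possibly an appeal to sufficiency in the Halmos--Savage sense to get pointwise identities from equality of mutual informations.

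\textbf{Step 2 (plug in the exponential families).} Since $\KL[q_f\|p_\theta]=0$, we have $q(z\mid z_c)=p_\theta(z\mid z_c)$ almost surely. Substituting Assumption~\ref{ass:exponential-families} on both sides and cancelling carriers in the ratio gives, for conditions $c_k$ and $c_0$ with corresponding encoded conditions,
\[
(\alpha(c_k)-\alpha(c_0))^\top\tau_\star(s)+\text{const}_k=(\beta_k-\beta_0)^\top\tau_z(z)+\text{const}'_k .
\]
This holds almost surely on the shared support. Stacking $k=1,\dots,r_s$ with the points from Assumption~\ref{ass:condition-diversity} gives $D\tau_\star(s)=B\tau_z(z)+d$. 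Inverting $D$ yields $\tau_\star(s)=A\tau_z(z)+b$ with $A=D^{-1}B$.

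\textbf{Step 3 (rank and consequences).} Suppose $A$ were not of full row rank. Then some $v\neq 0$ satisfies $v^\top A=0$, which makes $v^\top\tau_\star(s)$ almost surely constant. Conditioning on some $c$ contradicts Assumption~\ref{ass:source-nondegenerate}. Hence $\rank A=r_s\le r_z$. If $\tau_\star$ is injective, then $s=\tau_\star^{-1}(A\tau_z(z)+b)$, which is measurable on the image as an injective Borel map's inverse (Lusin--Souslin). If $r_z=r_s$, full row rank makes $A$ square and invertible.
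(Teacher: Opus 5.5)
Your proposal is correct and follows essentially the same route as the paper. The decomposition $I[s;c]-I[z;z_c]=I[s;c\mid z]+I[z;c\mid z_c]$ gives $s\indep c\mid z$ and $z\indep c\mid z_c$, hence the likelihood-ratio identity between $p(s\mid c)$ and $p(z\mid z_c)$; stacking with the invertible $D$, the nondegeneracy rank argument and Lusin--Souslin then finish it. The one step the paper makes explicit is extending the identity from the reference condition $c_0$ to the full joint law: all $p(s,z\mid c)$ share null sets, because $p(s\mid c)$ is strictly positive on the common carrier and $p(z\mid s)$ does not depend on $c$. Your phrase ``almost surely on the shared support'' only gestures at this step.
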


The full proof is given in Appendix~\ref{app:proof}.
In broad strokes, the first step is to use successful MI maximization to show that $z_c$ retains all information about $s$ from $c$, and similarly $z$ about $c$.
This can be leveraged to directly relate the true and learned conditional probability distributions of $s$ and $z$. 
From there the result follows via well established identifiability theory approaches for exponential family models \citep[e.g.,][]{khemakhem2020variational}.

\subsection{Specialization to Gaussian prediction}

Theorem~\ref{thm:exponential-readout} is fairly general; here we aim to understand the behavior of commonly used non-generative models, which rely on Gaussianity assumptions.
The goal function of many of these can be understood as approximately maximizing
\begin{align}
    \label{eq:gaussloss}
    \mathcal{F} \propto \left\langle - \frac{1}{2\sigma^2}  \parallel z - \mu_\theta(z_c) \parallel^2 \right\rangle_{q_f(z,z_c)} + H_{q_f}[z] ,
\end{align}
where $\mu_\theta$ is a learned prediction function (cf.\ \citealp{kugelgen_self-supervised_2022,mikulasch2026understanding}).
This expression follows directly from the simplified goal function (\Eqref{eq:infoLDMsimple}) by choosing $p_\theta$ to be a Gaussian with fixed variance $\Sigma_\theta=\sigma^2 \identity$ \citep{shwartz2023information,mikulasch2026understanding}.
Further asserting
\begin{equation}
    p(s\mid c)=\Normal(s;\mu_\star(c),\Sigma_\star) ,
    \qquad
    p_\theta(z\mid z_c)=\Normal(z;\mu_\theta(z_c),\Sigma_\theta) ,
    \qquad
    \Sigma_\star\succ0 ,
    \quad
    \Sigma_\theta\succ0 ,
    \label{eq:gaussian-conditionals}
\end{equation}
Theorem~\ref{thm:exponential-readout} can be specialized to give the stronger result of affine recovery of $s$.

\begin{corollary}[Gaussian affine recovery]
\label{cor:gaussian-readout}
Under the assumptions of Theorem~\ref{thm:exponential-readout} and \Eqref{eq:gaussian-conditionals}, necessarily $d_z\ge d_s$, and there is a full-row-rank matrix $A\in\R^{d_s\times d_z}$ and $b\in\R^{d_s}$ such that
\begin{equation}
    s=Az+b
    \qquad\text{almost surely}.
    \label{eq:gaussian-readout}
\end{equation}
If $d_z=d_s$, then $A$ is invertible.
\end{corollary}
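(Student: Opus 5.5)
The plan is to apply Theorem~\ref{thm:exponential-readout} after writing both Gaussian conditionals of \Eqref{eq:gaussian-conditionals} in exponential-family form. Then we use the structure of the Gaussian sufficient statistic to remove the quadratic components. For $p(s\mid c)=\Normal(s;\mu_\star(c),\Sigma_\star)$ with $\Sigma_\star$ fixed, we have
\[
\log p(s\mid c)=-\tfrac12 s^\top\Sigma_\star^{-1}s+\mu_\star(c)^\top\Sigma_\star^{-1}s-\tfrac12\mu_\star(c)^\top\Sigma_\star^{-1}\mu_\star(c)+\text{const}.
\]
The quadratic term goes into the carrier $h_s$, so the minimal sufficient statistic is $\tau_\star(s)=s$, with $r_s=d_s$ and natural parameter $\alpha(c)=\Sigma_\star^{-1}\mu_\star(c)$. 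Likewise $\tau_z(z)=z$, with $r_z=d_z$, $\beta(z_c)=\Sigma_\theta^{-1}\mu_\theta(z_c)$ and $h_z(z)\propto\exp(-\tfrac12 z^\top\Sigma_\theta^{-1}z)$.

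Next we check that the assumptions of Theorem~\ref{thm:exponential-readout} hold in this parametrization. Assumptions~\ref{ass:markov}, \ref{ass:ideal-optimum} and~\ref{ass:condition-diversity} are inherited by hypothesis. Assumption~\ref{ass:condition-diversity} is now read as: the vectors $\Sigma_\star^{-1}(\mu_\star(c_i)-\mu_\star(c_0))$ span $\R^{d_s}$. Assumption~\ref{ass:source-nondegenerate} holds automatically, because $\tau_\star(s)=s$ has a nondegenerate Gaussian law with $\Sigma_\star\succ0$. Such a law assigns zero mass to any proper affine hyperplane, so its support lies in no such hyperplane.

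The theorem then yields $r_z\ge r_s$, that is, $d_z\ge d_s$. It also yields a full-row-rank $A\in\R^{d_s\times d_z}$ and $b$ with $s=\tau_\star(s)=A\tau_z(z)+b=Az+b$ almost surely. This is exactly \Eqref{eq:gaussian-readout}. If $d_z=d_s$, then $A$ is a square full-rank matrix and hence invertible, as the last clause of the theorem states.

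The main obstacle is making sure the theorem is applied to the \emph{linear} statistics rather than the full Gaussian statistics $(s, ss^\top)$. The theorem's conclusion concerns whatever $\tau$ appears in Assumption~\ref{ass:exponential-families}. Absorbing the quadratic part into the carrier is legitimate only because both covariances are fixed, i.e.\ independent of $c$ and of $z_c$. One must check that $\Sigma_\theta$ does not depend on $z_c$, which is the case for \Eqref{eq:gaussloss}. If one instead included the quadratic statistic, Assumption~\ref{ass:condition-diversity} would fail, since those natural-parameter coordinates never vary with $c$. Keeping the linear statistic therefore both satisfies the assumptions and gives the sharper affine conclusion.
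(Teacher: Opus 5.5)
Your proposal is correct and follows the paper's argument: the paper likewise derives the corollary directly from Theorem~\ref{thm:exponential-readout}, using that for fixed-covariance Gaussians the sufficient statistics are $\tau_\star(s)=s$ and $\tau_z(z)=z$ with $r_s=d_s$ and $r_z=d_z$. Your explicit natural parameters, your check of Assumption~\ref{ass:source-nondegenerate}, and your remark that the quadratic part must go into the carrier (since Assumption~\ref{ass:condition-diversity} would otherwise fail) are sound elaborations that the paper leaves implicit.
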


This result follows immediately from Theorem~\ref{thm:exponential-readout} by the fact that the statistic of the fixed variance Gaussian distribution is the identity $\tau_\star(s)=s$ and $\tau_z(z)=z$, and $d_s=r_s$, $d_z=r_z$.

Intuitively, affine recovery fully relies on noise in the dynamics, as described in previous analyses \citep{zimmermann_contrastive_2022,laiz2024self,mikulasch2026understanding}: Noise in the true latent variables has Gaussian structure, and forcing learned variables to be Gaussian locally straightens latent coordinates around the predicted mean, which globally leads to an affine relation between true and recovered variables. 
Information sufficiency on the other hand supplies the model with a guarantee that all predictable signal variables are recovered.
Note that nothing forces the model to discard nuisance variables entirely, except if the latent dimensionality of the model is restricted to match the dimensionality of the true signal.
Similar results might be obtained for other exponential family distributions, such as von Mises-Fisher for variables on the sphere.

\section{Simulation experiments}

\label{sec:numerical}

To numerically verify our proof, we performed a range of control experiments.
First, we followed a common recovery experimental setup \citep[e.g.,][]{zimmermann_contrastive_2022}.
Specifically, we generated length-five sequences with a signal following
\[
s_{t+1}=\rho R s_t+\sqrt{1-\rho^2}\,\epsilon_t,
\qquad \epsilon_t\sim\mathcal N(0,I),
\]
where $\rho=0.9$ and $R$ is a fixed random orthogonal matrix. 
At every step, we independently sampled a 15-dimensional Gaussian nuisance variable and map the combined signal and nuisance through a fixed injective 3-layer \ac{MLP} with leaky ReLU activation to obtain a 100-dimensional observation. 
We varied the signal dimension $d_S$ from 5 to 20.

To recover the signal variables we trained a 5-layer \ac{MLP} mapping the input to a latent space with same dimension as the signal, $d_Z=d_S$.
Predictions were generated through an \ac{LSTM} taking all previous timesteps and a linear prediction head.
Training maximized the InfoLDM loss for Gaussian predictors (Equation~\ref{eq:gaussloss}), where the entropy was estimated through either kNN, KDE, or log-determinant of covariance (logdet) entropy estimators, as specified in \cite{mikulasch2026understanding}.
Simulations showed good affine signal recovery up to ten-dimensional latents, except for KDE entropy estimation, which showed slightly worse performance in this scenario (Table~\ref{tab:numerical}).

\begin{table}[h]
    \small
    \caption{$R^2$ of the learned representations with the true factors of variation for the numerical test, averaged over 5 runs plus minus standard deviations. We highlight well learned factors ($R^2>0.9$).}
    \label{tab:numerical}
    \vspace{3mm}
    \centering
    \begin{tabular}{lcccc}
\toprule
\textbf{Entropy estimator} & $d_s=5$ & $d_s=10$ & $d_s=15$ & $d_s=20$ \\
\midrule
KDE & $\mathbf{0.92 \pm 0.16}$ & $0.81 \pm 0.04$ & $0.60 \pm 0.08$ & $0.48 \pm 0.08$ \\
kNN & $\mathbf{0.98 \pm 0.00}$ & $\mathbf{0.95 \pm 0.04}$ & $0.76 \pm 0.01$ & $0.64 \pm 0.02$ \\
logdet & $\mathbf{0.99 \pm 0.00}$ & $\mathbf{0.95 \pm 0.04}$ & $0.78 \pm 0.02$ & $0.68 \pm 0.01$ \\
\bottomrule
\end{tabular}

\end{table}

\subsection{Identifying stochastic causal relations between images}
\label{sec:c3d}

As second example we tested signal recovery in high-dimensional images with strong, structured signal stochasticity, and structured, but independent nuisance.
For this purpose we constructed image pairs in which a designated subset of generative factors was causally related. 
We based this on the Causal3DIdent dataset \citep{kugelgen_self-supervised_2022}, which provides 11 image factors (Figure~\ref{fig:c3d-pairs}).
After mapping these factors to standard-normal coordinates, we sampled
$s_2 = \rho s_1 + \sqrt{1-\rho^2}\,\epsilon$
with
$\epsilon\sim\mathcal N(0,I)$ and
$\rho=0.9$. Since Causal3DIdent provides a finite set of rendered images, we selected the second image uniformly from the $k=16$ nearest neighbors of $s_2$. All remaining factors, including object class, are thus random and determined by the selected image. We considered two settings: an \emph{object} signal comprising object position, rotation, and hue, and an \emph{environment} signal comprising spotlight position, spotlight hue, and background hue (Figure~\ref{fig:c3d-pairs}).

For signal recovery we used the same architecture and losses as in the numerical examples (Section~\ref{sec:numerical}), except for switching to a ResNet-18 as the image encoder.
To give models the option to encode also nuisance we chose the latent dimension to be 10, similar to the total number of signal and nuisance factors.
The simulations showed that learning consistent resulted in affine recovery of most of the causal factors, while performance depended on the entropy estimator, with KDE performing best in this scenario (Table~\ref{tab:c3d}).
We further found that the entropy estimator also influenced whether or not nuisance variables were retained in the latent representation.
Especially object position tended to be partly retained, which is sensible given the convolutional encoder; but also other variables were partly retained, an effect that became even more pronounced when computing nonlinear $R^2$ (Appendix, Table~\ref{tab:c3dnonlinear}).

\begin{table}[htb]
    \caption{$R^2$ of the learned representations with the true factors of variation for the stochastic Causal3DIdent dataset, averaged over 5 runs. Standard deviations are not displayed here but are mostly small ($<0.02$) except for unidentified signal / identified nuisance variables, indicating intermittent optimization failures. We highlight well learned factors ($R^2>0.9$) in bold and signal factors with grey background.}
    \vspace{3mm}
    \label{tab:c3d}

    \tiny
    \centering
    \begin{tabular}{llcccccccccc}
\toprule
\textbf{Scenario} & \textbf{Entr. est.} & $p_{\text{obj-}x}$ & $p_{\text{obj-}y}$ & $p_{\text{obj-}z}$ & $\theta_{\text{obj-}\alpha}$ & $\theta_{\text{obj-}\beta}$ & $\theta_{\text{obj-}\gamma}$ & $h_\text{obj}$ & $p_\text{spot}$ & $h_\text{spot}$ & $h_\text{back}$ \\
\midrule
Env. signal & KDE & $0.20$ & $0.47$ & $0.37$ & $0.05$ & $0.08$ & $0.12$ & $0.41$ & \cellcolor{gray!15} $\mathbf{0.97}$ & \cellcolor{gray!15} $\mathbf{0.95}$ & \cellcolor{gray!15} $\mathbf{0.97}$ \\
Env. signal & kNN & $0.05$ & $0.55$ & $0.39$ & $0.08$ & $0.05$ & $0.07$ & $0.11$ & \cellcolor{gray!15} $\mathbf{0.97}$ & \cellcolor{gray!15} $\mathbf{0.95}$ & \cellcolor{gray!15} $\mathbf{0.97}$ \\
Env. signal & logdet & $-0.01$ & $-0.02$ & $-0.00$ & $-0.01$ & $-0.00$ & $-0.01$ & $-0.01$ & \cellcolor{gray!15} $\mathbf{0.98}$ & \cellcolor{gray!15} $0.76$ & \cellcolor{gray!15} $\mathbf{0.99}$ \\
\midrule
Object signal & KDE & \cellcolor{gray!15} $\mathbf{0.98}$ & \cellcolor{gray!15} $\mathbf{0.98}$ & \cellcolor{gray!15} $\mathbf{0.97}$ & \cellcolor{gray!15} $\mathbf{0.95}$ & \cellcolor{gray!15} $\mathbf{0.96}$ & \cellcolor{gray!15} $\mathbf{0.94}$ & \cellcolor{gray!15} $\mathbf{0.92}$ & $-0.01$ & $0.01$ & $0.13$ \\
Object signal & kNN & \cellcolor{gray!15} $\mathbf{0.97}$ & \cellcolor{gray!15} $\mathbf{0.96}$ & \cellcolor{gray!15} $\mathbf{0.96}$ & \cellcolor{gray!15} $0.87$ & \cellcolor{gray!15} $\mathbf{0.92}$ & \cellcolor{gray!15} $0.26$ & \cellcolor{gray!15} $\mathbf{0.91}$ & $-0.01$ & $-0.00$ & $0.56$ \\
Object signal & logdet & \cellcolor{gray!15} $\mathbf{0.98}$ & \cellcolor{gray!15} $\mathbf{0.97}$ & \cellcolor{gray!15} $\mathbf{0.97}$ & \cellcolor{gray!15} $\mathbf{0.92}$ & \cellcolor{gray!15} $\mathbf{0.95}$ & \cellcolor{gray!15} $\mathbf{0.91}$ & \cellcolor{gray!15} $0.87$ & $0.15$ & $-0.00$ & $-0.00$ \\
\bottomrule
\end{tabular}

\end{table}

\begin{figure}[tb]
    \centering
    \scriptsize
    \begin{minipage}[c]{0.42\linewidth}
        \centering
    \textbf{Environment signal}\par\vspace{1.5mm}
    \end{minipage}
    \begin{minipage}[c]{0.56\linewidth}
        \centering
    \textbf{Examples}\par\vspace{1.5mm}
    \end{minipage}
    
    \begin{minipage}[c]{0.42\linewidth}
        \centering
\resizebox{0.96\linewidth}{!}{\cslthreefactorpanel{ps,hs,hb}}
    \end{minipage}\hfill
    \begin{minipage}[c]{0.56\linewidth}
        \centering
\includegraphics[width=0.96\linewidth,trim=0 0 0 0,clip]{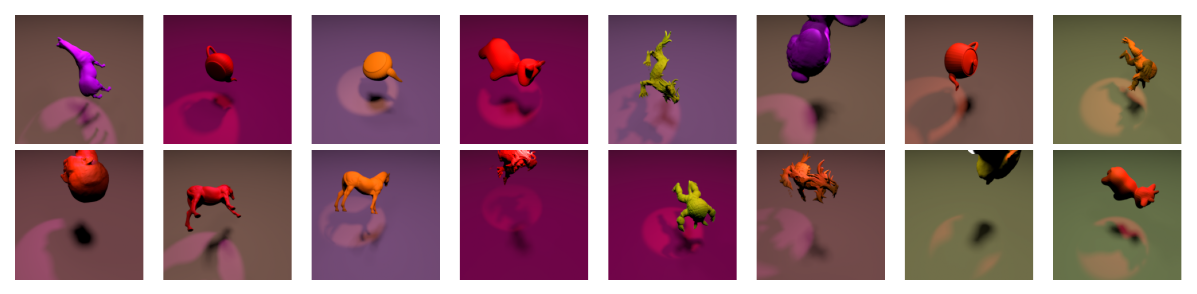}
    \end{minipage}

    \vspace{3mm}
    \begin{minipage}[c]{0.42\linewidth}
        \centering
    \textbf{Object signal}\par\vspace{1.5mm}
    \end{minipage}
    \begin{minipage}[c]{0.56\linewidth}
        \centering
    \textbf{Examples}\par\vspace{1.5mm}
    \end{minipage}

    \begin{minipage}[c]{0.42\linewidth}
        \centering
\resizebox{0.96\linewidth}{!}{\cslthreefactorpanel{p,theta,ho}}
    \end{minipage}\hfill
    \begin{minipage}[c]{0.56\linewidth}
        \centering
\includegraphics[width=0.96\linewidth,trim=0 0 0 0,clip]{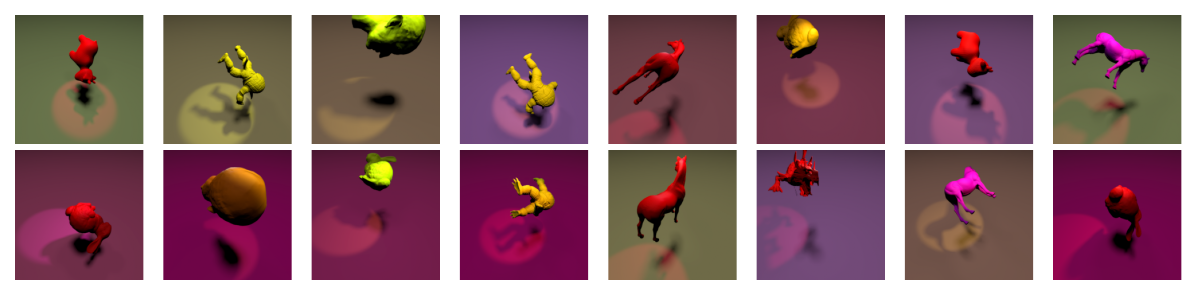}
    \end{minipage}
    \caption{Causal3DIdent image-pairs.
    The left diagrams show the true latent factors which are: object identity ($i_\mathrm{obj}$), position ($p_{\text{obj-}x}$, etc., here displayed grouped), rotation ($\theta_{\text{obj-}\alpha}$, etc.) and hue ($h_\text{obj}$), spotlight position ($p_\text{spot}$), hue ($h_\text{spot}$), and background hue ($h_\text{back}$).
Arrows mark the factors causally coupled across image pairs.
    Right panels show example rendered image pairs.
    Even though factors between images are causally connected, there still exists significant variation due to noisy coupling.}
    \label{fig:c3d-pairs}
\end{figure}

\subsection{Latent world model with actions and nuisance}

In a last experiment we tested recovery and nuisance removal in a physical system with controlled actions.
To that end we extended the MuJoCo suite \citep{todorov2012mujoco}.
We chose the hopper environment, where actions were defined by a pretrained controller \citep{nikulin2025latent}, and for simulation diversity we added small Gaussian noise on both environment dynamics and controller actions.
To introduce nuisance variables we sampled 12-dimensional independent Gaussian nuisance vectors per timestep that were then translated into nuisances such as color changes of the physical body and background contrast, as well as additional high-dimensional background colored noise (Figure~\ref{fig:hopper}A,B).
We expected that latent representations would capture hopper state variables that are causally related through the simulation.

We used the same setup for learning as for the image dataset (Section~\ref{sec:c3d}) with a 16-dimensional latent state, while replacing the prediction head with an \ac{MLP} that received the \ac{LSTM} output and the encoded action.
All models learned representations that allowed to linearly decode the hopper state (Figure~\ref{fig:hopper}D), while nonlinear decoding further improved $R^2$, depending on the entropy estimator (Appendix, Figure~\ref{fig:hopper_R2_appendix}). The nonlinear recovery probe is reasonable given that the environment does not explicitly follow the Gaussian dynamics of Corollary~\ref{cor:gaussian-readout}. 
We also trained models with an image encoder with attached gradient, resulting in representations that did not allow to decode hopper state, but consistently encoded parts of the nuisance variables (Appendix, Figure~\ref{fig:hopper_R2_appendix}).
As additional control we trained a model with the same predictor setup and an image decoder to predict the next observation, which did not result in better performance than direct decoding.
This discrepancy can also be seen in the different topological structure of the latent representations (Figure~\ref{fig:hopper}C).

\begin{figure}[tb]
    \centering
    \includegraphics[width=1\textwidth]{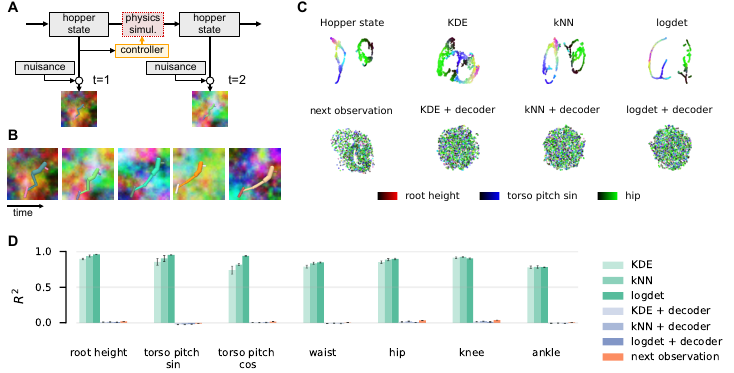}
    \caption{Learning a world model for the controlled MuJoCo hopper.  
    \textbf{(A)}~Dataset generation.
    \textbf{(B)}~Example timeseries.
    \textbf{(C)}~UMAP of true signal (hopper state) and learned representations. Colors denote variable values.
    \textbf{(D)}~$R^2$ of latent representations with hopper variables from linear readout.}
    \label{fig:hopper}
\end{figure}

\section{Conclusion and Limitations}

We proved that common \ac{SSL} learning methods recover signal variables even in the presence of high-dimensional nuisance and stochastic latent dynamics, closing the gap between prior identifiability results, which considered either but not both. 
This answers the question posed at the outset: representations can formally distinguish unpredictable nuisance from mere stochastic variation in the predictable signal, because 
predictive \ac{MI} maximization and \ac{DM} jointly constrain what is retained and how.
Our result requires only few assumptions, the strongest of which are exponential family conditional distributions in the true latent variables, and an algorithm that successfully maximizes the InfoLDM loss;
otherwise, the latent dynamics can be almost arbitrarily complicated, and nuisance can affect observations nontrivially.
Future work should explore the viability and real-world relevance of  predictive exponential family distributions beyond the Gaussian scenario tested here.

Our theory does not preclude that nuisance is encoded in the learned representation (orthogonally to the signal) if there is capacity to accommodate it, and entropy maximization in principle should encourage this \citep{kugelgen_self-supervised_2022,mikulasch2026understanding}.
Empirically we found that nuisance variables are often not decodable even if the latent dimensionality would permit it, either because they collapse, or because they are represented in a highly nonlinear manner that is not easily inverted.
We also found that both signal recovery performance and nuisance leakage depended on the choice of entropy estimator, and the data it was applied to.
This suggest that improved entropy estimation is one of the most pressing open problems towards robust and generally applicable \ac{SSL} methods.

\subsection*{AI use statement}

In this work, we used generative AI tools to assist in the writeup of proofs, correcting proofs, and to create or edit software code.
We have not used generative AI tools to help develop theoretical models or conceptual frameworks, formulate mathematical claims, propose or refine hypotheses, design or provide feedback on research methodology or experiments, implement methods and generate synthetic data sets, or to interpret results.
We have reviewed all AI-assisted work. LLM-generated code was verified and tested for correctness, proofs were checked and formalized in Lean and the Lean statements critically reviewed. We take responsibility for the final content of this work,
including text, claims or artifacts produced with the aid of generative AI.

\subsection*{Ethics statement}

This paper presents work whose goal is to advance the field
of machine learning. There are many potential societal
consequences of our work, none of which we feel must be
specifically highlighted here.

\subsection*{Reproducibility statement}

For reproducibility we provide all simulation and evaluation code, as well as Lean proof formalizations at \href{https://github.com/fmi-basel/identifiable-stochastic-nuisance}{https://github.com/fmi-basel/identifiable-stochastic-nuisance}.

\subsubsection*{Acknowledgments}
We thank Michael Hauri and all Zenke
Lab members for their input and discussions. This project
was supported by the Swiss National Science Foundation
(Grant Number PCEFP3\_202981) and the Novartis Research
Foundation.

\bibliography{iclr2027_conference}
\bibliographystyle{preprintbiblio}

\newpage

\appendix
\section{Appendix}

\subsection{Mathematical appendix}

\subsubsection{Detailed assumptions}
\label{app:assumptions}

\begin{assumption}[Private nuisance]
\label{ass:markov}
The stochastic part of the model factorizes as
\begin{equation}
    p(n,s,x_c,c)
    =p(n\mid s)p(s\mid c)p(x_c\mid c)p(c),
    \label{eq:markov-factorization}
\end{equation}
with $x=g(s,n)$, $z=f(x)$, and $z_c=f_c(x_c)$.
More specifically, the nuisance is private to the current observation in the sense that
\begin{equation}
    n\indep c\mid s.
    \label{eq:private-nuisance}
\end{equation}
\end{assumption}

The privacy condition has the equivalent information-theoretic form
\begin{equation}
    I[(s,n);c]
    =I[s;c]+I[n;c\mid s]
    =I[s;c].
    \label{eq:private-nuisance-information}
\end{equation}
Thus the nuisance carries no information about the condition beyond the signal.

\begin{assumption}[Exponential-family prediction]
\label{ass:exponential-families}
Let $\tau_\star:\mathcal S\to\R^{r_s}$ be a sufficient statistic, and let $h_s(s)$ be a fixed carrier function on $\mathcal S$.
For each condition $c$, let $\alpha(c)\in\R^{r_s}$ be its natural parameter, and let $\psi$ be the corresponding log-partition function.
The true predictive family is
\begin{equation}
    p(s\mid c)
    =h_s(s)
    \exp\!\left(
        \alpha(c)^\top\tau_\star(s)-\psi(\alpha(c))
    \right)
    \label{eq:source-family}
\end{equation}
for almost every condition $c$.

Similarly, let $\tau_z:\mathcal Z\to\R^{r_z}$ be a sufficient statistic, and let $h_z(z)$ be a fixed carrier function on $\mathcal Z$.
For each encoded condition $z_c$, let $\beta(z_c)\in\R^{r_z}$ be its learned natural parameter, and let $\phi$ be the corresponding log-partition function.
The learned predictive model is
\begin{equation}
    p_\theta(z\mid z_c)
    =h_z(z)
    \exp\!\left(
        \beta(z_c)^\top\tau_z(z)-\phi(\beta(z_c))
    \right)
    \label{eq:learned-model-family}
\end{equation}
for almost every encoded condition $z_c$.
\end{assumption}
As mentioned before, it is not necessary to assume any particular marginal distribution of the encoded conditioning variables $z_c$.
In the joint formulation, one may set $p_\theta(z_c)=q_f(z_c)$, so the KL term compares only the predictive conditionals (Equation~\ref{eq:infoLDMsimple}).

\begin{assumption}[Distribution matching and information saturation]
\label{ass:ideal-optimum}
Let $q_f(z,z_c)$ be the encoded joint distribution and let $p_\theta(z,z_c)$ be a learned predictive model.
Information-augmented predictive LDM uses the population objective
\begin{equation}
    \mathcal F(f,f_c,\theta)
    =
    -\KL[q_f(z,z_c)\|p_\theta(z,z_c)]+I_{q_f}[z;z_c].
    \label{eq:mi-objective}
\end{equation}
We assume that an ideal population optimum is realizable through optimizing the model $p_\theta$, encoder $f$, and the condition encoder $f_c$, at which
\begin{equation}
    \KL[q_f(z,z_c)\|p_\theta(z,z_c)]=0,
    \qquad
    I_{q_f}[z;z_c]=I[(s,n);c]=I[s;c]<\infty.
    \label{eq:ideal-optimum}
\end{equation}
\end{assumption}

As discussed in the main text, this assumption implicitly replaces common invertibility assumptions.
It also implies that latent distributions match, and for the proof we define
\begin{equation}
    p(z,z_c):=q_f(z,z_c)=p_\theta(z,z_c).
\end{equation}

\begin{assumption}[Condition diversity]
\label{ass:condition-diversity}
Even after excluding an arbitrary zero-probability subset of conditions, one can choose $c_0,c_1,\ldots,c_{r_s}$ such that
\begin{equation}
    D
    :=
    \begin{bmatrix}
        (\alpha(c_1)-\alpha(c_0))^\top\\
        \vdots\\
        (\alpha(c_{r_s})-\alpha(c_0))^\top
    \end{bmatrix}
    \in\R^{r_s\times r_s}
    \label{eq:parameter-difference-matrix}
\end{equation}
is invertible.
\end{assumption}
Invertability requires the condition variation to span all $r_s$ natural-parameter directions.
In the proof, these directions supply $r_s$ independent likelihood-ratio equations that recover every coordinate of the source sufficient statistic.

\begin{assumption}[No affine redundancy in the source statistic]
\label{ass:source-nondegenerate}
For every condition $c$ for which \Eqref{eq:source-family} holds, the random vector $\tau_\star(s)$ with $s\sim p(s\mid c)$ is not almost surely contained in a proper affine hyperplane of $\R^{r_s}$.
\end{assumption}
This assumption complements Assumption~\ref{ass:condition-diversity} and says that the sufficient statistic also genuinely uses all $r_s$ coordinates.

\subsubsection{Full Proof of Theorem~\ref{thm:exponential-readout}}
\label{app:proof}

\begingroup
\def\thetheorem{\ref{thm:exponential-readout} (Restated)}

\begin{theorem}[Exponential-family readout in InfoLDM]
Suppose Assumptions~\ref{ass:markov},~\ref{ass:exponential-families},~\ref{ass:ideal-optimum},~\ref{ass:condition-diversity}, and~\ref{ass:source-nondegenerate} hold, and the InfoLDM goal function (\Eqref{eq:infoLDM}) is fully maximized.

Then the dimension of the model sufficient statistic $r_z\ge r_s$, and there is a full-row-rank matrix $A\in\R^{r_s\times r_z}$ and a vector $b\in\R^{r_s}$ such that
\begin{equation}
    \boxed{\tau_\star(s)=A\tau_z(z)+b}
    \qquad\text{almost surely}.
    \label{eq:main-readout2}
\end{equation}
If $\tau_\star$ is injective, then $s$ is almost surely a measurable function of $z$.
If $r_z=r_s$, then $A$ is invertible and the two sufficient statistics are related by an invertible affine transformation.
\end{theorem}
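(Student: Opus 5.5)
The plan has two stages. First, use information saturation to convert the learned conditional $p(z\mid z_c)$ into a statement about the true conditional $p(s\mid c)$. Second, apply the standard exponential-family likelihood-ratio argument. Throughout, write $p(z,z_c)=q_f(z,z_c)=p_\theta(z,z_c)$ as in Assumption~\ref{ass:ideal-optimum}.

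\textbf{Step 1 (sufficiency from saturation).} The generative model gives the Markov chain $z_c - x_c - c - (s,n) - x - z$, so by the data processing inequality
\begin{equation*}
I_{q_f}[z;z_c]\le I[z;c]\le I[(s,n);c]=I[s;c].
\end{equation*}
Equality at the optimum forces $I[z;c]=I[(s,n);c]$. Since $z$ is a function of $(s,n)$, this means $c-z-(s,n)$ is also Markov, i.e.\ $p(c\mid s,n)=p(c\mid z)$ almost surely. By Assumption~\ref{ass:markov}, $p(c\mid s,n)=p(c\mid s)$. Symmetrically, $I[z;z_c]=I[z;c]$ gives $z - z_c - c$, i.e.\ $p(z\mid c)=p(z\mid z_c)$ almost surely.

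Now consider a pair of conditions $c,c'$ in the support. Bayes' rule gives
\begin{equation*}
\frac{p(s\mid c)}{p(s\mid c')}=\frac{p(c\mid s)\,p(c')}{p(c'\mid s)\,p(c)}=\frac{p(c\mid z)\,p(c')}{p(c'\mid z)\,p(c)}=\frac{p(z\mid c)}{p(z\mid c')}=\frac{p_\theta(z\mid z_c)}{p_\theta(z\mid z_c')},
\end{equation*}
for $z_c,z_c'$ encodings of draws of $x_c$ under $c,c'$ (almost surely). Doing this rigorously needs care with null sets and conditioning on the continuous variables $c$, and I expect it to be the main obstacle. I would handle it by working with versions of the conditional densities and removing a zero-probability set of conditions, which is exactly why Assumption~\ref{ass:condition-diversity} is phrased to survive such exclusions.

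\textbf{Step 2 (exponential-family identification).} Taking logs of the identity from Step 1 and inserting both families, the carriers $h_s$ and $h_z$ cancel, leaving
\begin{equation*}
(\alpha(c)-\alpha(c'))^\top\tau_\star(s)-\psi(\alpha(c))+\psi(\alpha(c'))=(\beta(z_c)-\beta(z_c'))^\top\tau_z(z)-\phi(\beta(z_c))+\phi(\beta(z_c')).
\end{equation*}
Apply this with $c=c_i$ and $c'=c_0$ for $i=1,\dots,r_s$, chosen outside the excluded null set, and stack the equations. This gives $D\,\tau_\star(s)=L\,\tau_z(z)+e$ almost surely, for some $L\in\R^{r_s\times r_z}$ and $e\in\R^{r_s}$. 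Since $D$ is invertible, $\tau_\star(s)=A\tau_z(z)+b$ with $A=D^{-1}L$ and $b=D^{-1}e$. The identity holds almost surely jointly in $(s,n)$, hence almost surely under each $p(s\mid c)$ by absolute continuity.

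\textbf{Step 3 (rank and corollaries).} Suppose $A$ were not of full row rank. Then some $v\ne0$ has $v^\top A=0$, so $v^\top\tau_\star(s)=v^\top b$ almost surely, placing $\tau_\star(s)$ in a proper affine hyperplane. This contradicts Assumption~\ref{ass:source-nondegenerate}. Hence $\rank A=r_s$, which gives $r_z\ge r_s$. If $r_z=r_s$, then $A$ is square of full rank and therefore invertible. If $\tau_\star$ is injective, then $s=\tau_\star^{-1}(A\tau_z(z)+b)$, and measurability of the inverse follows from the Lusin–Souslin theorem for injective Borel maps between Polish spaces.
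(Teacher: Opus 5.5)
Your proposal is correct and follows essentially the same route as the paper's proof. Information saturation yields $s\indep c\mid z$ and $z\indep c\mid z_c$; you obtain these from equality in the data-processing chain, while the paper uses the identity $I[s;c]-I[z;z_c]=I[s;c\mid z]+I[z;c\mid z_c]$. That gives the likelihood-ratio identity, and stacking with invertible $D$, globalization by mutual absolute continuity, the rank argument via Assumption~\ref{ass:source-nondegenerate}, and Lusin--Souslin then proceed as you describe. The only cosmetic difference is that you derive the ratio identity by Bayes' rule on $c$ (equality of pointwise mutual information), whereas the paper cancels the $c$-independent kernels in $p(z\mid s)p(s\mid c)=p(s\mid z)p(z\mid c)$. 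The two are equivalent once the $p(c)$ factors are read as Radon--Nikodym derivatives with respect to the law of $c$.
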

\addtocounter{theorem}{-1}
\endgroup

\begin{proof}

The proof proceeds in four steps. 
In  Step~1, we show that \ac{MI} maximization implies that the learned context $z_c$ and true context $c$ are interchangeable for latent prediction.
In Step~2, we show that it also implies that it is possible to equate the likelihood ratios of the learned representation $z$ and the of true signal $s$ with respect to some reference condition $c_0$.
This is the key identity that explains why representations have to recover the signal.

The following two steps are closely related to previous exponential family identifiability proofs \citep{khemakhem2020variational}.
In Step~3 we use the above likelihood ratio, the exponential family assumption, and the condition diversity assumption to establish that there is an affine relation between the statistics $\tau_z(z)$ and $\tau_\star(s)$ when $z$ and $s$ are sampled under $c_0$.
Finally, we extend this result in Step~4 to the entire $p(c)$, and derive the full-rank and invertibility relating $z$ and $s$.

\noindent\emph{Step 1: Information saturation and transfer.}
The chain rule of mutual information lets us compute
\begin{align}
\begin{split}
    \label{eq:MIequality}
    I[s;c]-I[z;z_c]
    &=\bigl(I[s;c]-I[z;c]\bigr)
      +\bigl(I[z;c]-I[z;z_c]\bigr)\\
    &=\bigl(I[s,z;c]-I[z;c]\bigr)
      +\bigl(I[z;c,z_c]-I[z;z_c]\bigr)\\
    &=I[s;c\mid z]+I[z;c\mid z_c].
\end{split}
\end{align}
The second step follows, since from the generative model (Figure~\ref{fig:private-nuisance-dag}) and Assumption~\ref{ass:markov} we know that $z\indep c\mid s$ and $z\indep z_c\mid c$.
\Eqref{eq:MIequality}, together with Assumption~\ref{ass:ideal-optimum} implies $s\indep c\mid z$ and $z\indep c\mid z_c$.
Consequently, for almost every jointly occurring pair $(c,z_c)$,
\begin{equation}
    p(z\mid c)
    =p(z\mid c,z_c)
    =p(z\mid z_c).
    \label{eq:conditional-transfer}
\end{equation}
Thus $p(z\mid c)$ is a member of the learned exponential family for almost every condition $c$.

\medskip
\noindent\emph{Step 2: Likelihood-ratio equality.}
The two conditional independences involving $s$ tell us that the conditional distributions relating $s$ and $z$ are independent of $c$, i.e., $p(z\mid s,c)=p(z\mid s)$ and $p(s\mid z,c)=p(s\mid z)$.
Therefore, the following two factorizations hold for almost every condition $c$:
\begin{equation}
    p(s,z\mid c)
    =p(z\mid s)p(s\mid c)
    =p(s\mid z)p(z\mid c).
    \label{eq:common-factorizations}
\end{equation}
Because the conditional distributions between $s$ and $z$ in \Eqref{eq:common-factorizations} do not depend on $c$, they cancel when the distributions at $c$ and some reference $c_0$ are compared
\begin{equation}
    \frac{p(s\mid c)}{p(s\mid c_0)}
    =\frac{p(z\mid c)}{p(z\mid c_0)}
    \qquad\text{for almost every $(s,z)$ generated under $c_0$}.
    \label{eq:rn-cancellation-any}
\end{equation}
The ratios are well defined because all selected family members are strictly positive wherever their common carrier functions are positive.

\medskip
\noindent\emph{Step 3: Locally affine relation.}
We now want to select a number of $r_s$ different contexts $c_i$ that constrain the relation between $s$ and $z$.
By using \Eqref{eq:rn-cancellation-any} for any $c_i$ we get
\begin{equation}
    \frac{p(s\mid c_i)}{p(s\mid c_0)}
    =\frac{p(z\mid c_i)}{p(z\mid c_0)}
    \qquad\text{for almost every $(s,z)$ generated under $c_0$}.
    \label{eq:rn-cancellation}
\end{equation}
Taking logarithms, and using the two exponential-family forms (Assumption~\ref{ass:exponential-families}) we get
\begin{equation}
        (\alpha(c_i)^\top - \alpha(c_0)^\top)\tau_\star(s)-[\psi(\alpha(c_i))-\psi(\alpha(c_0))]
    =
        (\beta_i - \beta_{0})^\top\tau_z(z)-[\phi(\beta_i) - \phi(\beta_{0})].
    \label{eq:log-ratios}
\end{equation}
Here we chose parameters $\beta_i$ for which (by the result of Step 1)
\begin{equation}
    p(z\mid c_i)
    =h_z(z)\exp\!\left(\beta_i^\top\tau_z(z)-\phi(\beta_i)\right).
\end{equation}
We now define $d_i:=\alpha(c_i)-\alpha(c_0)$, and $e_i:=\beta_i-\beta_0$.
With this notation \Eqref{eq:log-ratios} becomes
\begin{equation}
    d_i^\top\tau_\star(s)
    =e_i^\top\tau_z(z)+h_i,
    \qquad
    h_i:=\psi(\alpha(c_i))-\psi(\alpha(c_0))
          -\phi(\beta_i)+\phi(\beta_0).
    \label{eq:selected-log-ratios}
\end{equation}
Let $D$ have rows $d_i^\top$, $E$ have rows $e_i^\top$, and let $h=(h_1,\ldots,h_{r_s})^\top$.
Thus, stacking \Eqref{eq:selected-log-ratios} gives
\begin{equation}
    D\tau_\star(s)=E\tau_z(z)+h
    \qquad\text{for almost every $(s,z)$ generated under $c_0$}.
    \label{eq:stacked-ratios}
\end{equation}
Finally, Assumption~\ref{ass:condition-diversity} lets us choose conditions $c_0,c_1,\ldots,c_{r_s}$ for which $D$ is invertible.
This proves the readout for samples generated under $c_0$, with
\begin{equation}
    A:=D^{-1}E,
    \qquad
    b:=D^{-1}h.
    \label{eq:readout-coefficients}
\end{equation}

\medskip
\noindent\emph{Step 4: Globalization, rank, and reconstruction.}
What remains is to extend this reference-condition identity to the full joint distribution, prove that $A$ has full row rank, and derive the reconstruction conclusions.

For globalization, we note that for almost every condition $c$, the distribution $p(s\mid c)$ has the same zero-probability sets as $p(s\mid c_0)$ because both are strictly positive on the common carrier.
Multiplying these distributions with the same $p(z\mid s)$ preserves this property for the conditional joint distributions of $p(s,z\mid c)$, i.e., any set of probability zero under $p(s,z\mid c_0)$ has probability zero under $p(s,z\mid c)$.
In particular, the set $\{(s,z):\tau_\star(s)\neq A\tau_z(z)+b\}$ has probability zero under $p(s,z\mid c_0)$ and hence under $p(s,z\mid c)$.
The readout therefore holds conditionally for almost every condition $c$.
Averaging over $p(c)$ gives \Eqref{eq:main-readout} under the original joint distribution.

To show that $A$ is full rank, assume it were not, i.e., $\rank(A)<r_s$. In this case the reference-condition identity (\Eqref{eq:stacked-ratios}) places $\tau_\star(s)$ in the proper affine subspace $b+\im(A)$ under $p(s\mid c_0)$.
Because $p(s\mid c_0)$ is strictly positive on the common carrier, this contradicts Assumption~\ref{ass:source-nondegenerate}.
Thus $\rank(A)=r_s$ and $r_z\ge r_s$.
When $r_z=r_s$, the full-row-rank matrix $A$ is square and hence invertible.

To reconstruct $s$ from $z$ we have to invert $\tau_\star$. If $\tau_\star$ is injective, the Lusin--Souslin theorem supplies a measurable inverse on its image, which may be extended measurably by a fixed value outside that image, so
\begin{equation}
    s=\tau_\star^{-1}\!\left(A\tau_z(z)+b\right)
    \qquad\text{almost surely}.
    \label{eq:state-reconstruction}
\end{equation}
\end{proof}

\subsection{Lean formalization}

To verify the proof we translated it to compiling Lean code using generative AI. 
The declaration \nolinkurl{statisticReadout_of_exponentialFamily_nuisanceLDM} formalizes Theorem~\ref{thm:exponential-readout} after the mutual-information argument.
Below is the header of the main function which includes the main assumptions and the conclusion.
The line numbers refer to the actual Lean source, the full declaration and proof can be found in the source code at \nolinkurl{Lean/ExponentialFamilyReadout.lean}.

% (VerbatimInput) lean_header.txt
\begin{Verbatim}
758 theorem statisticReadout_of_exponentialFamily_nuisanceLDM
759     {Ω H V X Y : Type*}
760     [MeasurableSpace Ω] [StandardBorelSpace Ω]
761     [MeasurableSpace H] [StandardBorelSpace H] [Nonempty H]
762     [MeasurableSpace V] [StandardBorelSpace V] [Nonempty V]
763     [MeasurableSpace X] [StandardBorelSpace X] [Nonempty X]
764     [MeasurableSpace Y] [StandardBorelSpace Y] [Nonempty Y]
765     (P : Measure Ω) [IsProbabilityMeasure P]
766     (history : Ω → H) (view : Ω → V)
767     (signal : Ω → X) (code : Ω → Y)
768     (hhistory : Measurable history) (hview : Measurable view)
769     (hsignal : Measurable signal) (hcode : Measurable code)
770     (signal_history_given_code : signal ⟂ᵢ[code, hcode; P] history)
771     (code_history_given_signal : code ⟂ᵢ[signal, hsignal; P] history)
772     (code_history_given_view : code ⟂ᵢ[view, hview; P] history)
773     (code_view_given_history : code ⟂ᵢ[history, hhistory; P] view)
774     (sourceCarrier : Measure X) (codeCarrier : Measure Y)
775     (sourceStatistic : X → EuclideanSpace ℝ ι)
776     (codeStatistic : Y → EuclideanSpace ℝ κ)
777     (hsourceStatistic : Measurable sourceStatistic)
778     (hcodeStatistic : Measurable codeStatistic)
779     (sourcePartition : EuclideanSpace ℝ ι → ℝ)
780     (codePartition : EuclideanSpace ℝ κ → ℝ)
781     (sourceParameter : H → EuclideanSpace ℝ ι)
782     (learnedParameter : V → EuclideanSpace ℝ κ)
783     (hsourceConditional :
784       condDistrib signal history P =ᵐ[P.map history]
785         fun c ↦ statisticNaturalExpMeasure sourceCarrier sourceStatistic
786           sourcePartition (sourceParameter c))
787     (hlearnedConditional :
788       condDistrib code view P =ᵐ[P.map view]
789         fun v ↦ statisticNaturalExpMeasure codeCarrier codeStatistic
790           codePartition (learnedParameter v))
791     (hhistoryDiversity :
792       ∀ good : Set H, (∀ᵐ c ∂P.map history, c ∈ good) →
793         ∃ (c₀ : H) (c : ι → H),
794           c₀ ∈ good ∧
795           (∀ i, c i ∈ good) ∧
796           Submodule.span ℝ
797             (Set.range fun i ↦ sourceParameter (c i) - sourceParameter c₀) = ⊤)
798     (hsourceNoAffineRedundancy :
799       ∀ (w : EuclideanSpace ℝ ι) (a : ℝ),
800         (fun x ↦ inner ℝ w (sourceStatistic x)) =ᵐ[sourceCarrier]
801             (fun _ ↦ a) →
802           w = 0) :
803     ∃ (A : EuclideanSpace ℝ κ →L[ℝ] EuclideanSpace ℝ ι)
804         (b : EuclideanSpace ℝ ι),
805       (fun ω ↦ sourceStatistic (signal ω)) =ᵐ[P]
806           (fun ω ↦ A (codeStatistic (code ω)) + b) ∧
807       Function.Surjective A ∧
808       Module.finrank ℝ (EuclideanSpace ℝ ι) ≤
809         Module.finrank ℝ (EuclideanSpace ℝ κ) ∧
810       (Function.Injective sourceStatistic →
811         ∃ decoder : Y → X, Measurable decoder ∧
812           signal =ᵐ[P] fun ω ↦ decoder (code ω)) ∧
813       (Module.finrank ℝ (EuclideanSpace ℝ κ) =
814           Module.finrank ℝ (EuclideanSpace ℝ ι) →
815         Function.Bijective A) := by
\end{Verbatim}

This declaration reads in blocks as follows.

\paragraph{Lines 759--769: probability space and variables.}
\texttt{Omega} is the underlying probability space, and \texttt{H}, \texttt{V}, \texttt{X}, and \texttt{Y} are the state spaces of $c,z_c,s,z$.
The standard-Borel and measurability assumptions are the formal version of the measurable setup in Section~1.
The finite types $\iota$ and $\kappa$ index the coordinates of $\tau_\star$ and $\tau_z$, so their \texttt{finrank}s are $r_s$ and $r_z$.

\enlargethispage{2\baselineskip}
\paragraph{Lines 770--773: the four conditional independences.}
In the order shown, these are
\begin{equation}
    s\indep c\mid z,
    \qquad
    z\indep c\mid s,
    \qquad
    z\indep c\mid z_c,
    \qquad
    z\indep z_c\mid c.
    \label{eq:lean-four-independences}
\end{equation}
The second and fourth are the Markov consequences of Assumption~\ref{ass:markov}.
The first and third follow in the paper from
\begin{equation}
    I[s;c]-I[z;z_c]=I[s;c\mid z]+I[z;c\mid z_c]=0.
    \label{eq:lean-mi-boundary}
\end{equation}
Thus Lean starts at the exact conditional-independence consequences of MI saturation, rather than formalizing mutual information and the optimization problem.

\paragraph{Lines 774--790: the two exponential families.}
The names \texttt{sourceCarrier}, \texttt{sourceStatistic}, \texttt{sourcePartition}, and \texttt{sourceParameter} correspond to $h_s,\tau_\star,\psi,\alpha$.
Their code-side counterparts correspond to $h_z,\tau_z,\phi,\beta$.
The two hypotheses \texttt{hsourceConditional} and \texttt{hlearnedConditional} are the conditional families
\begin{align}
    p(s\mid c)
    &=h_s(s)\exp\!\left(\alpha(c)^\top\tau_\star(s)-\psi(\alpha(c))\right), \\
    p(z\mid z_c)
    &=h_z(z)\exp\!\left(\beta(z_c)^\top\tau_z(z)-\phi(\beta(z_c))\right),
    \label{eq:lean-family-correspondence}
\end{align}
where \texttt{statisticNaturalExpMeasure} is the Lean implementation of the exponential-family distribution in
\Eqref{eq:source-family} and \Eqref{eq:learned-model-family}.
Lean's eventual-equality notation means that the displayed equality may fail only on a zero-probability set.

\paragraph{Lines 791--802: diversity and nonredundancy.}
The hypothesis \texttt{hconditionDiversity} is Assumption~\ref{ass:condition-diversity} written without coordinates: after removing any zero-probability set of conditions, it selects $c_0$ and a family $(c_i)$ whose differences $\alpha(c_i)-\alpha(c_0)$ span the whole source-statistic space.
The hypothesis \texttt{hsourceNoAffineRedundancy} is Assumption~\ref{ass:source-nondegenerate}: if $\langle w,\tau_\star(s)\rangle$ is almost surely constant on the carrier, then $w=0$.

\paragraph{Lines 803--815: conclusion.}
Lean returns a continuous linear map $A:\R^{r_z}\to\R^{r_s}$ and an offset $b$ with
\begin{equation}
    \tau_\star(s)=A\tau_z(z)+b
    \qquad\text{almost surely}.
    \label{eq:lean-readout-conclusion}
\end{equation}
\texttt{Function.Surjective A} is the full-row-rank conclusion $\rank(A)=r_s$, and the \texttt{finrank} inequality is $r_s\le r_z$.
The last two lines add the two corollaries in Theorem~\ref{thm:exponential-readout}: an injective $\tau_\star$ gives a measurable decoder $z\mapsto s$, and equal statistic dimensions make $A$ bijective.
The proof first establishes the likelihood-ratio identity at a reference condition and then extends it to the full model.

\subsection{Simulation details}

\paragraph{Numerical evaluation.}

We generated $100000$ training and $10000$ test sequences of length five. The fixed observation map consisted of three affine layers with leaky-ReLU activations and mapped the concatenated signal and nuisance variables to $100$-dimensional observations. The encoder was a five-layer \ac{MLP} with $200$ hidden units per layer and ReLU activations. Its output dimension matched the signal dimension, which was varied over $d_S\in\{5,10,15,20\}$. A single-layer \ac{LSTM} with $10$ hidden units summarized the observation history, and a linear prediction head predicted the next latent state.

Models were trained for $20$ epochs using Adam and a linearly decaying learning rate. Batch size was $256$ for the kNN and log-determinant estimators. 
In this and the following experiments, entropy estimation was implemented as described by \citet{mikulasch2026understanding}. We found that KDE performed poorly in this experiment and increased batchsize to $5096$ \citep[similar to ][]{kugelgen_self-supervised_2022} and the number of samples to $5\times10^6$ to improve performance. 

\paragraph{Causal3DIdent.}

We based our experiment on the data of \citet{kugelgen_self-supervised_2022} but resized images to $224\times224$ pixels. We used a ResNet-18 encoder producing a $10$-dimensional representation, followed by a single-layer \ac{LSTM} with $64$ hidden units and a linear prediction head. Models were trained for $25$ epochs with Adam. Batch sizes were $256$ for the kNN and KDE estimators and $64$ for the log-determinant estimator.

\paragraph{MuJoCo Hopper.}

We generated $100000$ training and $1000$ test rollouts from the DM-Control Hopper environment. Each sequence contained eight $128\times128$ rendered frames, separated by three control steps. Actions were produced by a pretrained controller \citep{nikulin2025latent} and perturbed with Gaussian exploration noise of standard deviation $0.1$. Additional unobserved actuator noise with standard deviation $0.5$ was applied before executing each action, making the controlled dynamics stochastic.
Note that adding Gaussian noise every simulation step, not every render step, results in a system that does not fully follow the assumptions of Theorem \ref{thm:exponential-readout}.
At every rendered timestep, independent nuisance variables modified the body and accent colors, camera position and field of view, illumination, and a procedurally generated colored background. The physical signal used for evaluation consisted of seven position variables and seven velocity variables.

Images were encoded by a ResNet-18 into a $16$-dimensional latent state. A single-layer \ac{LSTM} with $64$ hidden units summarized the latent history. The three controller commands between consecutive frames were encoded by a GRU with $16$ hidden units and supplied, together with the \ac{LSTM} state, to a one-hidden-layer prediction \ac{MLP}. For image decoding we used 4-layer upsampling CNNs trained with L2 loss. With attached decoder loss we approximately balanced latent prediction and image prediction loss via a Lagrange multiplier. For the observation prediction model we used the exact same setup without latent prediction loss and the decoder attached to the prediction module. Models were trained for $40$ epochs using Adam. Batch sizes were $128$, $512$, $32$, and $128$ for the kNN, KDE, log-determinant estimators, and next observation prediction, respectively.

\subsection{Additional results}

\subsubsection{Causal3DIdent}

\begin{table}[H]
    \caption{Nonlinear $R^2$ for the Causal3DIdent experiment (Table~\ref{tab:c3d}), computed with trained \ac{MLP} predictor on held-out dataset.}
    \label{tab:c3dnonlinear}
    \vspace{3mm}
    \tiny
    \begin{tabular}{lllcccccccccc}
\toprule
Scenario & Entr. est. & $p_{\text{obj-}x}$ & $p_{\text{obj-}y}$ & $p_{\text{obj-}z}$ & $\theta_{\text{obj-}\alpha}$ & $\theta_{\text{obj-}\beta}$ & $\theta_{\text{obj-}\gamma}$ & $h_\text{obj}$ & $p_\text{spot}$ & $h_\text{spot}$ & $h_\text{back}$ \\
\midrule
Environment signal & kde & $0.35$ & $0.52$ & $0.34$ & $0.07$ & $0.10$ & $0.16$ & $0.53$ & \cellcolor{gray!15} $\mathbf{0.99}$ & \cellcolor{gray!15} $\mathbf{0.97}$ & \cellcolor{gray!15} $\mathbf{0.99}$ \\
Environment signal & knn & $0.21$ & $0.47$ & $0.34$ & $0.08$ & $0.06$ & $0.08$ & $0.16$ & \cellcolor{gray!15} $\mathbf{0.98}$ & \cellcolor{gray!15} $\mathbf{0.96}$ & \cellcolor{gray!15} $\mathbf{0.98}$ \\
Environment signal & logdet & $-0.02$ & $-0.02$ & $-0.03$ & $-0.02$ & $-0.01$ & $-0.00$ & $-0.01$ & \cellcolor{gray!15} $\mathbf{0.99}$ & \cellcolor{gray!15} $0.57$ & \cellcolor{gray!15} $\mathbf{0.98}$ \\
\midrule
Object signal & kde & \cellcolor{gray!15} $\mathbf{0.98}$ & \cellcolor{gray!15} $\mathbf{0.98}$ & \cellcolor{gray!15} $\mathbf{0.97}$ & \cellcolor{gray!15} $\mathbf{0.94}$ & \cellcolor{gray!15} $\mathbf{0.97}$ & \cellcolor{gray!15} $0.85$ & \cellcolor{gray!15} $\mathbf{0.91}$ & $0.14$ & $0.03$ & $0.36$ \\
Object signal & knn & \cellcolor{gray!15} $\mathbf{0.97}$ & \cellcolor{gray!15} $\mathbf{0.96}$ & \cellcolor{gray!15} $\mathbf{0.95}$ & \cellcolor{gray!15} $0.87$ & \cellcolor{gray!15} $\mathbf{0.92}$ & \cellcolor{gray!15} $0.29$ & \cellcolor{gray!15} $\mathbf{0.91}$ & $0.14$ & $-0.04$ & $0.69$ \\
Object signal & logdet & \cellcolor{gray!15} $\mathbf{0.99}$ & \cellcolor{gray!15} $\mathbf{0.99}$ & \cellcolor{gray!15} $\mathbf{0.98}$ & \cellcolor{gray!15} $\mathbf{0.95}$ & \cellcolor{gray!15} $\mathbf{0.97}$ & \cellcolor{gray!15} $\mathbf{0.96}$ & \cellcolor{gray!15} $\mathbf{0.92}$ & $0.16$ & $-0.04$ & $-0.04$ \\
\bottomrule
\end{tabular}
\end{table}

\subsubsection{Hopper}

\begin{figure}[h!]
    \centering
    \includegraphics[width=\linewidth]{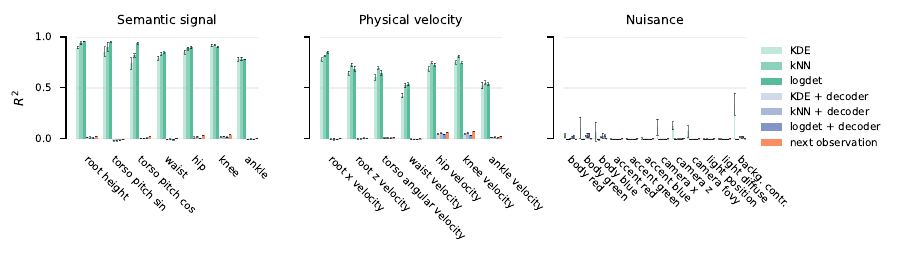}\par
    \vspace{2mm}
    \includegraphics[width=\linewidth]{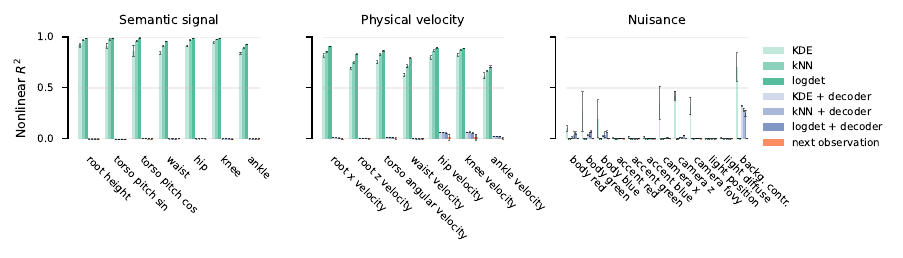}
    \caption{Linear (Top) and nonlinear (Bottom) $R^2$ with physical (signal) and nuisance variables for the hopper experiment. Velocity is read out from \ac{LSTM} state. Background pattern nuisance is not included.}
    \label{fig:hopper_R2_appendix}
\end{figure}

\end{document}